\documentclass{article}

\usepackage{amsmath,amsfonts,bm}

\def\eqref#1{equation~\ref{#1}}

\def\1{\bm{1}}

\DeclareMathAlphabet{\mathsfit}{\encodingdefault}{\sfdefault}{m}{sl}
\SetMathAlphabet{\mathsfit}{bold}{\encodingdefault}{\sfdefault}{bx}{n}

\newcommand{\R}{\mathbb{R}}

\newcommand{\RR}{\mathbb{R}}

\newcommand{\NN}{\mathbb{N}}
\newcommand{\Id}{I}

\usepackage{microtype}
\usepackage{graphicx}
\usepackage{subcaption}
\usepackage{booktabs} 
\usepackage{enumitem}

\usepackage{hyperref}

\usepackage[accepted]{src/icml2026}

\usepackage{amsmath}
\usepackage{amssymb}
\usepackage{mathtools}
\usepackage{amsthm}

\usepackage{minted}

\usepackage[capitalize,noabbrev]{cleveref}

\theoremstyle{plain}
\newtheorem{theorem}{Theorem}[section]
\newtheorem{proposition}[theorem]{Proposition}

\newtheorem{corollary}[theorem]{Corollary}
\theoremstyle{definition}
\newtheorem{definition}[theorem]{Definition}

\theoremstyle{remark}

\usepackage[textsize=tiny]{todonotes}

\icmltitlerunning{Trading Complexity for Expressivity Through Structured Generalized Linear Token Mixing}

\begin{document}

\twocolumn[
  \icmltitle{Trading Complexity for Expressivity Through\\ Structured Generalized Linear Token Mixing}



  \icmlsetsymbol{equal}{*}


  \begin{icmlauthorlist}
    \icmlauthor{Erwan Fagnou}{dauphine}
    \icmlauthor{Paul Caillon}{dauphine}
    \icmlauthor{Blaise Delattre}{dauphine,tokyo}
    \icmlauthor{Alexandre Allauzen}{dauphine,espci}
  \end{icmlauthorlist}

  \icmlaffiliation{dauphine}{Miles Team, LAMSADE, Université Paris Dauphine-PSL, Paris, France}
  \icmlaffiliation{espci}{ESPCI PSL, Paris, France}
  \icmlaffiliation{tokyo}{Department of Computer Science, School of Computing, Institute of Science Tokyo, Tokyo, Japan}

  \icmlcorrespondingauthor{Erwan Fagnou}{erwan.fagnou@dauphine.psl.eu}

  \icmlkeywords{Machine Learning, ICML}

  \vskip 0.3in
]



\printAffiliationsAndNotice{}  

\begin{abstract}

  Token mixing layers play a key role in how language models can learn
  and generate long-range dependencies. Their efficiency relies on the
  necessary trade-off between decoding speed and the memory
  requirements, along with the cache size. Considering causal generation, this paper explores new trade-offs
  thanks to a  unified framework which separates two crucial
  features: (i) the direct influence of inputs on
  outputs in one generation step; (ii) the recurrent propagation of
  information through past outputs.
  This framework encompasses major architectures such as attention and
  state-space models, but also generalizes the recurrence equations by
  allowing each state to depend on multiple past states rather than
  only the immediate predecessor. By introducing structure, we design
  new recurrence patterns that provably achieve the desired
  complexity, while providing theoretical insights on their
  expressivity -- trading runtime for expressivity in a principled
  way. Empirical validation is performed on synthetic tasks, along
  with language modeling.  Together, these results provide a
  unified toolkit for the understanding and design of efficient and
  expressive token mixers across model families.
\end{abstract}

\section{Introduction}
\label{sec:related}

Token mixing is the mechanism by which sequence models exchange
information across tokens or positions. For decades now, the design of
modern architectures has explored how to address this key
challenge. For instance, early recurrent neural networks (RNNs)
propagate information between two consecutive tokens through a hidden vector, which is updated iteratively through the sequence. They however suffer from multiple flaws: sequentiality limits parallelism, long-range dependencies are difficult to capture, and the optimization is a
burden. Transformers replaced recurrence with
self-attention, enabling global one-hop interactions and massive
parallelism during the training phase. This architecture quickly
became the backbone of large language models
\citep{vaswani2017attention,devlin2019bert,brown2020gpt3}. Yet their
quadratic cost in sequence length remains a bottleneck as context
windows scale toward hundreds of thousands or even millions of tokens.

In response, alternative kinds of mixers have been explored in the
field. For instance, low-rank and kernelized forms of linear attention
reduce the cost of the softmax operator
\citep{katharopoulos2020linear,choromanski2020rethinking,wang2020linformer}. In
a different way, state space models (SSMs) reinterpret token mixing as
structured linear recurrences, with certain formulations
\citep{gu2022s4} reducible to fast convolutions and others designed
for recurrent or scan-friendly execution \citep{gu2024mamba,
  dao2024mamba2}. These last versions can achieve strong performance
on long-range benchmarks. More recently, hybrid models combine these
mechanisms -- mixing SSMs with local or sparse attention, or
alternating different mixer types across layers -- to balance
expressivity, efficiency, and cache size
\citep{de2024griffin,zancato2024bmojo,nvidia2025nemotronh,gemmateam2025gemma3,secrieru2025sliding}.

This growing diversity underscores a key challenge: token mixing is no
longer embodied by a single operator but by a toolbox of mechanisms,
each making distinct trade-offs between complexity, expressivity, and
execution mode. One underexplored but important dimension is the order
of recurrence: whereas classical RNNs and most SSMs propagate
information through a single previous state, higher-order recurrences
extend the dependence to multiple past states.  Expressivity is
clearly improved but at the cost of an extra-complexity. Although this
idea was previously overlooked, a few notable efforts include
log-linear attention \citep{guo2025loglinearattention}, which induces
a logarithmic-order recurrence, Chimera \citep{lahoti2025chimera} which generalizes the SSM recurrence equation to graphs, and ChaCAL \citep{fagnou2024chacal, zhao2026blockchacal},
which formalizes an infinite-order recurrence.

In this work, we consider a unifying perspective on token mixing, showing
that every causal linear mixer can be decomposed into (i) direct
one-step input influence and (ii) recurrent propagation through past
outputs. This structured recurrence view covers attention, SSMs, and
linear attention, while exposing how design parameters control
complexity, cache size, and long-range capacity.

Our contributions are the following:
\begin{itemize}[nosep, topsep=0pt]
\item We formalize a general framework for causal linear token mixing
  that captures attention, SSMs, and their hybrids as special cases.
\item We provide theoretical insights into the trade-offs between
  computational complexity and expressive power.
\item We construct token mixers spanning a controlled range of
  complexities, from $\mathcal{O}(n)$ and $\mathcal{O}(n \log n)$ up
  to $\mathcal{O}(n^{3/2})$ and $\mathcal{O}(n^2)$.
\item We empirically validate these designs on synthetic benchmarks
  and language modeling pre-training tasks.
\end{itemize}
Taken together, our results offer a principled lens for analyzing and
designing efficient, expressive token mixers, providing conceptual
clarity across diverse architectures.

\section{Related works}


\textbf{Attention and efficient variants.}
Transformers popularized global attention, but its quadratic complexity has motivated numerous efficiency improvements. One line of work retains exact softmax attention while optimizing CUDA kernels (e.g., FlashAttention \citep{dao2022flashattention,dao2023flashattention}). Another line alters the operator itself: sparse and local attention restrict interactions while preserving long-range connectivity \citep{child2019generating,beltagy2020longformer,zaheer2020big}, while low-rank or kernelized linear attention formulations reduce complexity via feature maps or projections \citep{katharopoulos2020linear,choromanski2020rethinking,wang2020linformer,xiong2021nystromformer}.

\textbf{State Space Models (SSMs).}
An alternative approach frames token mixing as a linear dynamical system. HiPPO-based methods project sequences onto orthogonal polynomial bases to retain long-range history \citep{gu2020hippo}, while S4 and successors use diagonal-structured operators that can be implemented efficiently, either as convolutions or through parallel scan algorithms \citep{gu2022s4,smith2022simplified}. Adaptive variants, such as Mamba \citep{gu2024mamba}, introduce input-dependent gating to handle more complex sequence patterns, and Mamba-2 \citep{dao2024mamba2} streamlines the recurrence while providing theoretical connections to linear attention: its structured recurrence is equivalent to a 1-semiseparable transformation matrix, linking SSMs with masked linear attention and other gated linear attention variants.

\textbf{Theoretical limitations of SSMs and Transformers.}
SSMs provide efficient linear recurrences, but their memory of past inputs decays exponentially with distance \citep{wang2025understanding}, limiting long-range dependencies. Transformers, in contrast, can attend globally but lack recurrence, making tasks like entity tracking or copying challenging \citep{jelassi2024repeat,fagnou2024chacal}, and hindering generalization to longer sequences than seen in training \citep{beck2024xlstm}.

\textbf{Hybrids.}
Many recent models combine mixers to exploit complementary strengths. For example, Griffin interleaves gated linear recurrence with local attention \citep{de2024griffin}, and B’MOJO integrates SSMs, local, and sparse attention in a single layer \citep{zancato2024bmojo}. Larger models, such as Nemotron-H and Gemma 3, mix SSM and attention layers across the network to balance efficiency and expressivity \citep{nvidia2025nemotronh,gemmateam2025gemma3}. Other works explore combinations of attention and SSM-like operators \citep{waleffe2024empirical,wang2025systematic,arora2024based,thomas2025star}. These hybrid designs motivate frameworks that can describe multiple token mixing mechanisms within a single mathematical form.

\textbf{Higher-order recurrence.}
Beyond first-order recurrences, a few works explore multi-step or infinite-order dependencies. Higher-order RNNs were studied classically \citep{hush1991high,soltani2017higher}. More recently, log-linear attention exhibits a logarithmic-order recurrence \citep{guo2025loglinearattention}, while ChaCAL implements an infinite-order recurrence with explicit causal structure \citep{fagnou2024chacal}. Block-Chacal \citep{zhao2026blockchacal} improves the complexity of Chacal by decoupling local and long-term dependencies. In another line of work, Chimera \citep{lahoti2025chimera} generalizes SSMs to graphs, which can indirectly be used to compute higher-order recurrences since a sequence can be represented as a graph. These examples motivate our generalization of recurrence patterns beyond the standard first-order view.

It was shown by \citet{chen2026lmmutualinfo} that models require a memory capacity growing faster than $L^\beta$, where $L$ is the sequence length and $0 <\beta\leq1$ is a task-dependent scalar. This implies that fixed-order recurrences will always struggle for long sequences, and motivates our approach where the order of the recurrence increases with $L$.

\section{Framework}
\label{sec:framework}

\begin{figure*}
    \centering
    \includegraphics[width=1.0\linewidth]{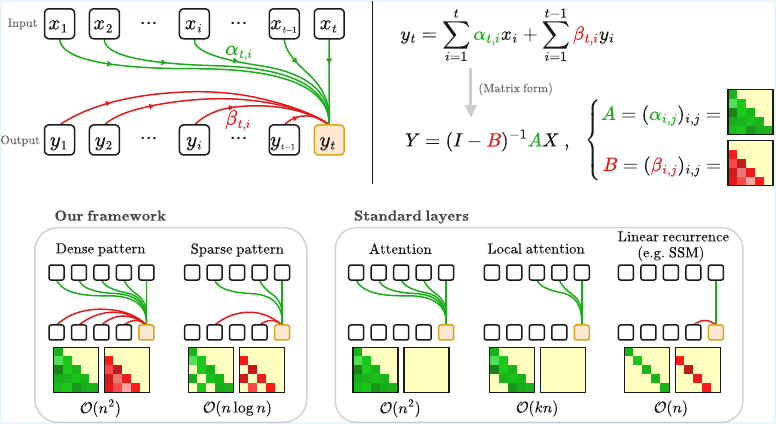}
    \caption{
    (Top) Our general formulation of a linear token mixing layer, which combines attention coefficients (green) and recurrence coefficients (red). The output can be expressed simply using matrix notations.    
    (Bottom) Examples of how different sparsity patterns of $A$ and $B$ produce standard layers (attention, linear recurrence, etc.) but also enable new behaviors.}
    \label{fig:framework_examples}
\end{figure*}

We consider causal token mixing as a linear operator that cleanly separates (i) direct, single-hop contributions from inputs and (ii) recursive, multi-hop contributions propagated through past outputs.



\subsection{Recurrent and matrix forms}

\begin{definition}[Generalized linear recurrence layer]
    Given a sequence of $n$ input vectors $x_1, \dots, x_n \in \R^d$, we define a generalized linear recurrence as a layer which outputs the vectors $y_1, \dots, y_n \in \R^d$, such that:
    \begin{equation} \label{eq:framework_as_recurrence}
        y_i \;=\;
        \underbrace{\sum_{j=1}^{i} \alpha_{i,j}\,x_j}_{\text{past inputs (direct mixing)}}
        \;+\;
        \underbrace{\sum_{j=1}^{i-1} \beta_{i,j}\,y_j}_{\text{past outputs (recurrent mixing)}}
    \end{equation}
    where the coefficients $\alpha_{i,j}$ and $\beta_{i,j}$ are arbitrary functions of the inputs (e.g. attention weights, SSM gating, etc --  see Appendices~\ref{subsec:examples} and \ref{sec:practical_considerations} for a discussion of possible choices).
\end{definition}

That is, the output $y_i$ is a linear combination of the past inputs $x_1, \dots, x_i$, and the past outputs $y_1, \dots, y_{i-1}$. A visualization of this layer is provided in Figure~\ref{fig:framework_examples} (top).

This recursive equation is useful for computing the outputs one at a time, at inference. 
However, the following matrix form has more compact notations and enables the use of more efficient parallel solvers.

\begin{proposition}[Matrix form] \label{prop:matrix_form}
    The output of a generalized linear recurrence layer can be equivalently expressed in matrix form:
    \begin{equation} \label{eq:framework_as_matrix}
        Y \;=\; (\Id - B)^{-1}\,A\,X
    \end{equation}

    where
    $X = (x_1 \dots x_n)^\top$,
    $Y = (y_1 \dots y_n)^\top$,
    $A = (\alpha_{i,j})_{i,j} \in \R^{n\times n}$ is lower triangular,
    and $B =  (\beta_{i,j})_{i,j} \in \R^{n\times n}$ is strictly lower triangular. This guarantees that $\Id-B$ is always invertible.
\end{proposition}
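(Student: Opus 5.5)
The plan is to rewrite the recurrence in \eqref{eq:framework_as_recurrence} row by row as a single matrix identity, and then solve that identity for $Y$.

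\textbf{Step 1: stack the recurrence into matrix form.} The $i$-th row of $AX$ is $\sum_{j} \alpha_{i,j} x_j^\top$. Since $A$ is lower triangular, $\alpha_{i,j}=0$ for $j>i$, so this sum equals the direct-mixing term $\sum_{j\le i}\alpha_{i,j}x_j^\top$.

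Likewise, the $i$-th row of $BY$ is $\sum_j \beta_{i,j} y_j^\top$. Because $B$ is strictly lower triangular, $\beta_{i,j}=0$ for $j\ge i$, so this sum reduces to the recurrent term $\sum_{j<i}\beta_{i,j}y_j^\top$.

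Here we set the unused coefficients $\alpha_{i,j}$ ($j>i$) and $\beta_{i,j}$ ($j\ge i$) to zero, which is exactly what the triangularity assumptions encode. Stacking the $n$ equations then shows that the recurrence is equivalent to $Y = AX + BY$.

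\textbf{Step 2: invertibility of $\Id-B$.} Since $B$ is strictly lower triangular, $\Id-B$ is lower triangular with unit diagonal. Hence $\det(\Id-B)=1\neq 0$, and $\Id-B$ is invertible.

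An alternative argument uses nilpotency. The matrix $B$ satisfies $B^n=0$, because each multiplication by $B$ pushes the nonzero entries at least one more diagonal below the main one. Therefore $(\Id-B)^{-1}=\sum_{k=0}^{n-1}B^k$, which can be checked by telescoping. This version also shows that $(\Id-B)^{-1}$ is lower triangular, which is consistent with the layer being causal.

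\textbf{Step 3: conclude both directions.} From $(\Id-B)Y = AX$ we get $Y=(\Id-B)^{-1}AX$.

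Conversely, suppose some $Y$ satisfies $Y=(\Id-B)^{-1}AX$. Multiplying by $\Id-B$ recovers $Y=AX+BY$, and Step 1 turns this back into the recurrence row by row. So the matrix form and the recurrence describe the same outputs.

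Uniqueness is immediate in either direction. The recurrence determines each $y_i$ by forward substitution, since $y_i$ only involves $y_1,\dots,y_{i-1}$, and the linear system has a unique solution because $\Id-B$ is invertible.

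\textbf{Main obstacle.} No step is really hard. The only point needing care is that $\alpha_{i,j}$ and $\beta_{i,j}$ may depend on the inputs $X$. This does not affect the argument: we fix $X$, which fixes $A$ and $B$, and the identity is then purely linear in the quantities being solved for.
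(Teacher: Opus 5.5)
Your proposal is correct and takes the paper's approach: the paper's proof simply rewrites the recurrence row by row as $Y = AX + BY$ and solves for $Y$. You also justify why $I - B$ is invertible (unit diagonal, or nilpotency of $B$ giving $(I-B)^{-1}=\sum_{k=0}^{n-1}B^k$), which the paper only asserts.
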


\begin{proof}
    Equation~\ref{eq:framework_as_matrix} follows naturally from writing Equation~\ref{eq:framework_as_recurrence} in the matrix form as $Y=AX+BY$.
\end{proof}

Note that this general formulation of linear recurrence is common in control theory \citep{anderson2019system, sieber2022tubebased} and other fields, and that the connection has been made in previous work for linear attention variants and SSMs \citep{gu2022s4, sieber2024understanding, dao2024mamba2, fagnou2024chacal, kimiteam2025kimilinear, lahoti2025chimera}. We simplify the framework and extend it to include softmax-based attention, in a way that directly translates into practical use. In particular, it highlights the importance of matrices $A$ and $B$, and how their structure (e.g. sparsity pattern) controls expressivity and computational cost.

\subsection{Attention and linear recurrences as special cases}

By choosing specific structures for the matrices $A$ and $B$, a generalized linear recurrence layer can behave like diverse standard token mixing layers:
\begin{itemize}[nosep]
    \item \textbf{Attention}: When $B=0$, the layer output becomes $Y = AX$ which is exactly the attention layer where $A$ is the attention matrix and $X$ the values.
    Further structure on $A$ (e.g. banded matrix) produces sparse attention variants (e.g. local attention).
    \item \textbf{Linear recurrence}: If $B$ is subdiagonal and $A$ diagonal, the recurrence is $y_t = \alpha_{t,t} x_t + \beta_{t,t} y_{t-1}$, which is a gated linear recurrence.
    \item \textbf{Diagonal SSMs}: These are in fact a special case of linear recurrence, with specific parameterization, and state expansion.
\end{itemize}
These examples are illustrated in Figure~\ref{fig:framework_examples} (bottom). We provide more detailed explanations regarding how these layers fit in our framework in Appendix~\ref{subsec:examples}.

\section{Pattern design}
\label{sec:sparsity}
An advantage of representing token mixing with Equation~\ref{eq:framework_as_matrix} is that we can enforce structure on $A$ and $B$ while still maintaining good expressivity. Indeed even if $B$ is very sparse, the inverse $(I-B)^{-1}$ will be a dense lower-triangular matrix which may model complex behaviors. In this section we explore how the structure of $A$ and $B$ influences time and memory complexities, as well as measures of expressivity. All proofs can be found in Appendix~\ref{sec:proofs}.

\subsection{Translation-invariant patterns} \label{subsec:translation_invariant_patterns}

We start by investigating the class of attention patterns that are invariant under translation. They come as a simple choice, and allow us to provide a theoretical analysis of their expressivity. Here and in the rest of the paper, we consider the case where $A$ and $B$ share the same pattern.

\subsubsection{Definitions}
Consider the equation \ref{eq:framework_as_recurrence} with $1 \leq j \leq i \leq n$, the token $i$ depends on all the tokens of index $j$ such that $\alpha_{i,j} \neq 0$. 
\begin{definition}[Translation-invariant pattern] Let $f: \NN_{\geq 0} \rightarrow \NN_{>0}$ be a strictly increasing function. We say a generalized linear recurrence layer follows the translation-invariant pattern induced by $f$, if:
\begin{align}
    \begin{split}
    \forall 1 \leq j \leq i \leq n, \quad \alpha_{i,j} \neq 0 \text{ or } \beta_{i,j} \neq 0 \\\quad \Leftrightarrow \quad \exists k \in \NN \text{ s.t. } j = i - f(k)
    \end{split}
\end{align}
\end{definition}

In other words, a token $i$ can only attend tokens $j$ that are at a distance $f(k)$ in the past, for the different and admissible values of $k$. For example, if $f(k)=2^k$, a token at position $i$ will only be allowed to attend the positions $i-1, i-2, i-4, i-8 \dots i - 2^{\lfloor \log_2 i \rfloor}$. The upper part of Figure~\ref{fig:attention_matrices} shows how the matrices $A$ and $B$ look like. We see that this exponential $f$ leads to a logarithmic number of past indices, which we generalize in the following proposition. 

\begin{proposition}[Time complexity] \label{prop:pattern_time_complexity}
    The token mixing layer with pattern induced by $f$ has a time complexity in $\mathcal{O}(g(i))$ for decoding the $i$-th token, where: $g(i) = \max \{k \in \NN \text{ s.t. } f(k) < i \}$.

    If $f$ can be extended to an invertible real function $\RR \rightarrow [1, \infty)$ then the complexity is in $\mathcal{O}(f^{-1}(i))$. We will assume this is the case in the rest of the paper for simplicity.
\end{proposition}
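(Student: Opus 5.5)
The plan is to count, for a fixed position $i$, the nonzero coefficients in row $i$ of $A$ and $B$. Decoding token $i$ with the recurrence of \eqref{eq:framework_as_recurrence} costs one multiply-add of a $d$-dimensional vector per nonzero $\alpha_{i,j}$ or $\beta_{i,j}$, with $d$ treated as a constant. We also assume each nonzero coefficient can be obtained in $\mathcal{O}(1)$ time: either it is a precomputed function of the inputs, or it is computed locally from the cached $x_j$, $y_j$. Under these assumptions the decoding cost is $\mathcal{O}$ of the number of admissible indices $j$.

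First, by the definition of a translation-invariant pattern, the admissible $j$ in row $i$ are exactly $j = i - f(k)$ with $k\in\NN$ and $1 \le j \le i$. Since $f$ is strictly increasing, the map $k\mapsto i-f(k)$ is injective, so the count equals $\#\{k \in \NN : 1\le i - f(k) \le i\}$. Because $f$ takes values in $\NN_{>0}$, the upper constraint always holds and $j=i$ never occurs. The lower constraint is $f(k) \le i-1$, i.e.\ $f(k)<i$. Strict monotonicity makes the set $\{k : f(k)<i\}$ a down-closed initial segment $\{0,1,\dots,g(i)\}$, where $g(i)=\max\{k : f(k)<i\}$. Its cardinality is therefore $g(i)+1$, which is $\mathcal{O}(g(i))$ whenever $g(i)\ge 1$, and $\mathcal{O}(1)$ otherwise. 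This is enough for the asymptotic statement, and any small-$i$ degenerate cases (e.g.\ the set being empty when $f(0)\ge i$) can be absorbed into the constant.

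For the second claim, let $\tilde f:\RR\to[1,\infty)$ be an invertible extension of $f$. It is monotone increasing, since it agrees with the strictly increasing $f$ on $\NN$ and is a continuous bijection — I would assume continuity implicitly, or simply require an increasing extension. Then $f(k)<i$ iff $k<\tilde f^{-1}(i)$, so $g(i)\le \tilde f^{-1}(i)$, giving the bound $\mathcal{O}(f^{-1}(i))$.

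The only delicate point is this last step: an arbitrary invertible extension need not be increasing. I would handle it by noting that an invertible continuous function on $\RR$ is strictly monotone, and that agreement with $f$ on $\NN$ forces it to be increasing. If continuity is not intended, I would state explicitly that the extension is taken monotone, which is clearly the authors' intent.
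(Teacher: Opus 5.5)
Your proposal is correct and follows essentially the same route as the paper: count the admissible offsets $\{k\in\NN : f(k)<i\}$, observe this is the initial segment $\{0,\dots,g(i)\}$ of size $1+g(i)$, then use $f(g(i))<i$ and monotonicity of the extension to get $g(i)\le f^{-1}(i)$. You make explicit the monotonicity of $f^{-1}$, which the paper uses silently when passing from $f(g(n))\le n$ to $g(n)\le f^{-1}(n)$, and you handle the degenerate small-$i$ cases; note only that no continuous bijection $\RR\to[1,\infty)$ exists, so your fallback of taking an increasing extension on a domain such as $[0,\infty)$ is the reading under which the hypothesis is non-vacuous.
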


For example we may choose $f$ to be linear, quadratic or exponential, which will imply a time complexity respectively linear, square-root, and logarithmic.


Finally, we introduce the communication graph $\mathcal{G}$, which we will use in the next sections to analyze information propagation across the recurrence, and measure expressivity.
\begin{definition}[Communication graph]
    We consider the communication graph $\mathcal{G} = (V, E)$, where the vertices $V = \{1, 2, \dots, n\}$ represent the token positions, and there is an edge from $j$ to $i$ iff $i-j = f(k)$ for some $k\in\NN$. 
  \end{definition}
  $\mathcal{G}$ is a directed acyclic graph (DAG) and intuitively,
  this graph represents how information can travel from a position to
  another, when computing the output of the layer. Two positions are
  connected if the pattern induced by $f$ allows this connection,
  which means a nonzero value in the $A$ and $B$ matrices. For a given
  node at position $i$, the set of incoming edges tells you the set of
  tokens that attend it.

\begin{figure}
    \centering
    \includegraphics[width=\linewidth]{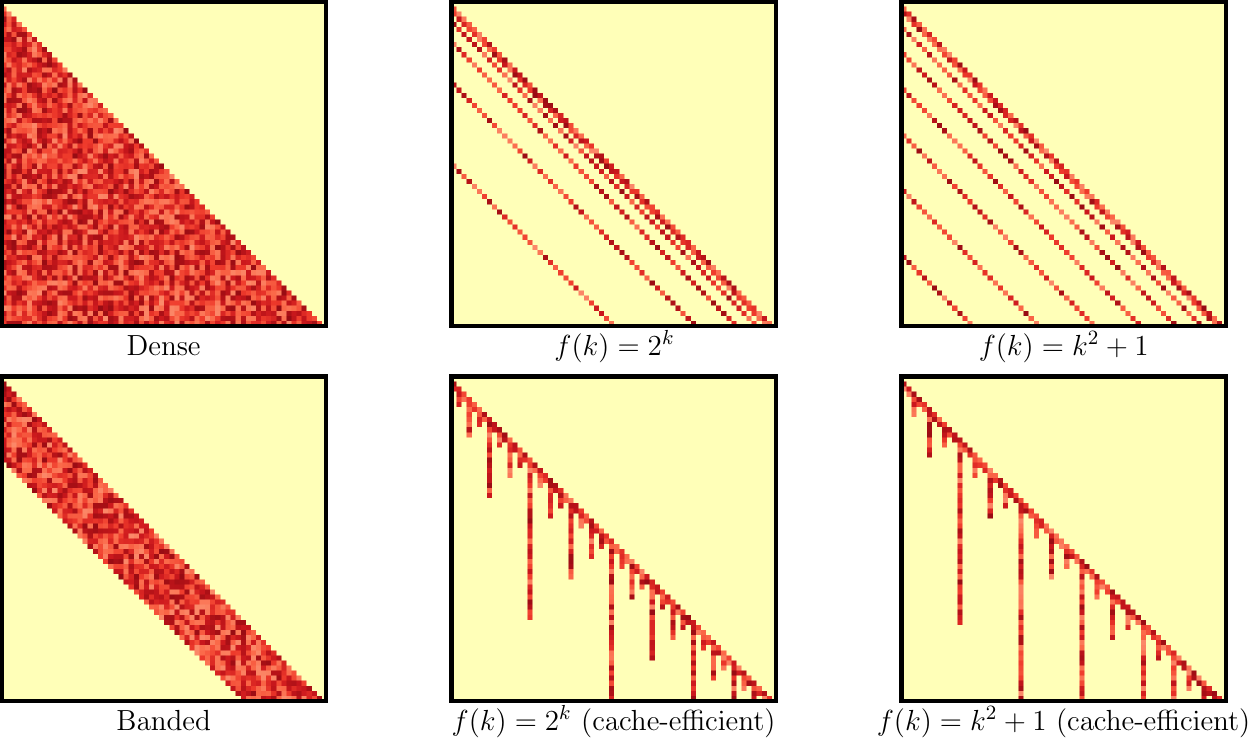}
    \caption{Visualization of different sparsity patterns for the matrices $A$ and $B$. Non-zero entries are colored in red. The time and space complexities of the layer depend on the properties of the sparsity pattern.}
    \label{fig:attention_matrices}
\end{figure}

\begin{table*}[h]
\caption{Different structures of the token mixing layers provide different trade-offs between computational cost and expressivity. We measure expressivity both using theoretical tools (section \ref{subsec:translation_invariant_patterns}) and synthetic benchmarks (section \ref{subsec:synthetic_tasks}). We compare standard attention, and its local version, to diverse structures: a diagonal SSM (first-order recurrence), a $k$-th order recurrence, and a dense recurrence (infinite order), followed by the patterns indiced by an exponential and a quadratic functions, as well as their cache-efficient version.
}
\label{tab:synthetic_tasks_results}
\centering
\resizebox{\textwidth}{!}{
\begin{tabular}{l|cc|cc|ccc}
\toprule
 & \multicolumn{2}{c|}{\textit{Computational cost}} & \multicolumn{2}{c|}{\textit{Expressivity}} & \multicolumn{3}{c}{\textit{Synthetic tasks (\%)}} \\
\textbf{Structure} & \textbf{Time per token} & \textbf{Cache size} & \begin{tabular}{@{}c@{}}\textbf{Shortest path} \\ \textbf{between tokens}\end{tabular} & \textbf{Congestion} & \textbf{Copy} & \begin{tabular}{@{}c@{}}\textbf{Associative} \\ \textbf{recall}\end{tabular} & \begin{tabular}{@{}c@{}}\textbf{Multi-hop} \\ \textbf{recall}\end{tabular} \\
\midrule
Attention            & $\mathcal{O}(n)$ & $\mathcal{O}(n)$ & $1$ & $1$ & \textbf{100.00} & \textbf{100.00} & 39.21 \\
Local attention      & $\mathcal{O}(k)$ & $\mathcal{O}(k)$ & $\infty$ & $1$ & 23.75 & 26.20 & 23.59 \\
\midrule
Diagonal SSM         & $\mathcal{O}(1)$ & $\mathcal{O}(1)$ & $n$ & $n$ & 42.98 & 32.53 & 27.17 \\
$k$-th order recurrence     & $\mathcal{O}(k)$ & $\mathcal{O}(k)$ & $\frac{n}{k}$ & $\frac{n}{k}$ & 74.66 & 41.12 & 39.08 \\
Dense recurrence              & $\mathcal{O}(n)$ & $\mathcal{O}(n)$ & $1$ & $1$ & \textbf{100.00} & \textbf{99.99} & \textbf{99.80} \\
\midrule
$f(k) = 2^k$         & $\mathcal{O}(\log_2 n)$ & $\mathcal{O}(n)$ & $\leq \log_2 n$ & $\leq \log_2 n$ & 92.63 & 49.03 & 34.85 \\
 + cache-efficient   & $\mathcal{O}(\log_2 n)$ & $\mathcal{O}(\log_2 n)$ &  &  & 75.47 & 52.59 & 38.63 \\
\midrule
$f(k) = k^2+1$       & $\mathcal{O}(\sqrt{n})$ & $\mathcal{O}(n)$ & $\leq 4$ & $\leq 4$ & \textbf{99.66} & 53.61 & 35.68 \\
 + cache-efficient   & $\mathcal{O}(\sqrt{n})$ & $\mathcal{O}(\sqrt{n})$ &  &  & 91.59 & 54.56 & 38.02 \\
\bottomrule
\end{tabular}}
\end{table*}

\subsubsection{Shortest information path}
One metric for measuring expressivity in such models is the shortest path that information can follow from a token $j$ to a token $i>j$ in the communication graph $\mathcal{G}$. Indeed, while in an attention layer all tokens are directly connected (distance of 1), recurrent models struggle at capturing long-range dependencies \citep{wang2025understanding}, since information has to be stored in memory for a long period of time. The longer the information path is, the harder it is to learn, especially because of vanishing gradient effects. 

\begin{proposition}[Shortest path] \label{prop:shortest_path}
    Given two positions $j<i$ in the communication graph $\mathcal{G}$, the length of the shortest path from $j$ to $i$ is:
    \begin{align} \label{eq:min_distance}
        d(i, j) = \min \left\{ d \in \NN \text{ s.t. } \exists \: a \in \NN^d, \sum_{k=1}^d f(a_k) = i-j \right\}
    \end{align}
    That is, the length depends on how many values of $f$ are needed to decompose the integer $i-j$.
\end{proposition}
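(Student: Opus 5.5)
The plan is a direct bijection between directed paths in $\mathcal{G}$ and ordered decompositions of $i-j$ into values of $f$. Then we take minima on both sides.

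\textbf{Paths give decompositions.} Let $j = v_0 \to v_1 \to \dots \to v_d = i$ be a directed path in $\mathcal{G}$ of length $d$. By the definition of the communication graph, every edge $v_{t-1}\to v_t$ satisfies $v_t - v_{t-1} = f(a_t)$ for some $a_t \in \NN$. Telescoping gives $\sum_{t=1}^d f(a_t) = v_d - v_0 = i-j$. So $a=(a_1,\dots,a_d)\in\NN^d$ belongs to the set in \eqref{eq:min_distance}, and the right-hand side is at most the length of any path.

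\textbf{Decompositions give paths.} Conversely, let $a \in \NN^d$ satisfy $\sum_{k=1}^d f(a_k) = i-j$. Define $v_t = j + \sum_{k=1}^t f(a_k)$ for $t=0,\dots,d$. Since $f$ takes values in $\NN_{>0}$, the sequence $(v_t)$ is strictly increasing from $v_0=j$ to $v_d=i$. Hence every $v_t$ lies in $\{j,\dots,i\}\subseteq\{1,\dots,n\}$ and is a valid vertex. Each consecutive difference is $f(a_t)$, so each $v_{t-1}\to v_t$ is an edge of $\mathcal{G}$, and we obtain a path of length $d$.

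\textbf{Conclusion.} The set of path lengths from $j$ to $i$ equals the set of admissible $d$ in \eqref{eq:min_distance}, so their minima coincide. The only point needing care is the vertex-range check, which uses positivity of $f$ to keep intermediate positions inside $[j,i]$. One should also note the conventions: if the set is empty (no path exists), both sides are $+\infty$ under $\min\emptyset=\infty$, consistent with the table's entry for local attention; and $d\ge 1$ since $i>j$. Strict monotonicity of $f$ is not needed here.
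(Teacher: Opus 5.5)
Your proof is correct and follows the same route as the paper, which also identifies a path with the sequence of $f$-values on its edges and telescopes; you additionally spell out both directions and check that the partial sums stay inside $[j,i]$, which the paper's terse chain of equalities leaves implicit. One side remark is off, though it does not affect the proof: the $\infty$ entry for local attention comes from $B=0$ forbidding any chaining through outputs, not from an empty decomposition set, since local attention is not a pattern induced by a strictly increasing $f$ on all of $\NN$.
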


\begin{corollary} \label{cor:shortest_path}
While Equation~\ref{eq:min_distance} is a complex problem to solve, simple choices for $f$ lead to closed-form solutions:
    \begin{itemize}
        \item If $f(k) = 2^k$, then $d(i,j)$ is the number of ones in the binary representation of $i-j$. This gives the bound $d(i,j) \leq \log_2(i-j)$.
        \item If $f(k) = k^2 + 1$, then by Lagrange's four-square theorem, we find that $d(i,j) \leq 4$.
    \end{itemize}
\end{corollary}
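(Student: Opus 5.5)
Both items follow from Proposition~\ref{prop:shortest_path}. Write $m=i-j\geq 1$. Then $d(i,j)$ is the smallest number of values of $f$, with repetition allowed, whose sum is $m$. So each item is a statement about additive decompositions of the integer $m$.

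\textbf{Case $f(k)=2^k$.} Here $d(i,j)$ is the least number of powers of two, not necessarily distinct, summing to $m$.
\begin{itemize}
\item \emph{Upper bound.} The binary expansion of $m$ writes $m$ as a sum of $\mathrm{popcount}(m)$ distinct powers of two, where $\mathrm{popcount}(m)$ is the number of ones in binary. Hence $d(i,j)\leq \mathrm{popcount}(m)$.
\item \emph{Reduction step.} Take any decomposition $m=\sum_{k=1}^{d}2^{a_k}$. If two exponents coincide, $a_k=a_l$, replace the pair $2^{a_k}+2^{a_l}$ by the single term $2^{a_k+1}$. This keeps the sum equal to $m$ and lowers the number of terms by one.
\item \emph{Termination.} Each step lowers the term count, so the process stops. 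It stops at a decomposition into distinct powers of two.
\item \emph{Lower bound.} By uniqueness of binary representation, that final decomposition is the binary expansion of $m$. Every decomposition therefore has at least $\mathrm{popcount}(m)$ terms, so $d(i,j)=\mathrm{popcount}(m)$.
\end{itemize}
For the bound, $m$ has $\lfloor\log_2 m\rfloor+1$ binary digits, so $d(i,j)\leq\lfloor\log_2(i-j)\rfloor+1$.

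This is slightly stronger than $\log_2(i-j)$ at some points and weaker at others. For example, $m=3$ gives $d=2>\log_2 3$, and $m=1$ gives $d=1>0$. The stated bound $d(i,j)\leq\log_2(i-j)$ is therefore only correct up to this additive $1$, and I would state it as $d(i,j)\leq\lfloor\log_2(i-j)\rfloor+1$, i.e.\ $\mathcal{O}(\log_2(i-j))$.

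\textbf{Case $f(k)=k^2+1$, with $k\in\NN$ including $0$ so that $f(0)=1$.} A decomposition with $d$ terms reads
\[
m=\sum_{k=1}^{d}(a_k^2+1)=d+\sum_{k=1}^{d}a_k^2 .
\]
So $m$ is reachable in $d$ steps exactly when $m-d$ is a sum of $d$ squares, zeros allowed.
\begin{itemize}
\item \emph{Subcase $m\geq 4$.} Then $m-4\geq 0$. By Lagrange's four-square theorem, $m-4=a_1^2+a_2^2+a_3^2+a_4^2$ for some $a\in\NN^4$. This gives $m=\sum_{k=1}^{4}f(a_k)$, hence $d(i,j)\leq 4$.
\item \emph{Subcase $m\in\{1,2,3\}$.} Use $m$ copies of $f(0)=1$. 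This gives $d(i,j)\leq m\leq 3$.
\end{itemize}

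\textbf{Main obstacle.} For $f(k)=2^k$, the only nontrivial step is the lower bound, i.e.\ showing that no decomposition with repeated powers can beat the binary expansion; the merging argument above settles it. For $f(k)=k^2+1$, the only subtlety is that the "$+1$" is absorbed by fixing exactly four terms; this needs $f(0)=1$ to be admissible and the small cases $m<4$ to be handled separately, as done above.
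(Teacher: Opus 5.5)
Your proof is correct and follows the paper's own argument. For $f(k)=2^k$ the paper uses the same merging observation (a shortest decomposition never repeats a power of two, so it is the binary expansion), and for $f(k)=k^2+1$ it likewise applies Lagrange's theorem to $i-j-4$ and treats $i-j\le 3$ separately. Your flag on the logarithmic bound is also justified: $d(i,j)\le\log_2(i-j)$ fails whenever $i-j=2^p-1$ (including $i-j=1,3$), and the correct statement is $d(i,j)\le\lfloor\log_2(i-j)\rfloor+1$. Note that this corrected bound is always weaker than $\log_2(i-j)$, not only at some points; it is $d$ itself that falls above or below $\log_2(i-j)$ depending on $i-j$. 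The paper's proof never actually derives the logarithmic bound.
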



\subsubsection{Congestion}

A major problem of standard recurrent models is that all past information is compressed into a single vector, which makes it impossible to recall large pieces of information \citep{jelassi2024repeat, chen2026lmmutualinfo}. By introducing additional connections to older hidden states, we aim at alleviating this bottleneck.

We formalize this via \emph{graph congestion}, in a setup similar to \citet{jelassi2024repeat}. 
The model is tasked with copying a sequence of length $n$ from input positions $1,\dots,n$ to output positions $n+1,\dots,2n$. In order to succeed, the model must find a way to route each input token $i$ to their respective output position $i+n$.

Consider a single token-mixing layer following a time-invariant pattern induced by $f$, and $\mathcal{G}$ the communication graph on this sequence of length $2n$.

\begin{definition}[Congestion]
    Let a candidate routing $\mathcal{P}$ be a set of $n$ paths in $\mathcal{G}$, where the $i$-th path connects input node $i$ to output node $i+n$. 
    We define the congestion of this routing as the maximum number of paths passing through any single node:
    \begin{equation}
        C(\mathcal{G}, \mathcal{P}) := \max_{1 \le i \le 2n} \#\{\, p \in \mathcal{P} \mid i \in p \,\}.
    \end{equation}
    The congestion of the layer is then the congestion of the best routing:
    \begin{equation}
        C(\mathcal{G}) := \min_{\mathcal{P}} C(\mathcal{G}, \mathcal{P})
    \end{equation}
\end{definition}

Intuitively, $C(\mathcal{G})$ measures the largest number of information paths that must pass through a single node. 
Standard recurrent models induce high congestion, since all paths must pass through the same positions, whereas higher-order recurrences can reduce $C(\mathcal{G})$ by distributing information across multiple states.

\begin{proposition}[Lower bound on congestion] \label{prop:congestion_lower_bound}
If we know that the shortest path between token $i$ and $i+n$ is at least $d$ long, for all $1 \leq i \leq n$, then we get:
\begin{equation}
    C(\mathcal{G}) \ge \frac{d+1}{2}
\end{equation}
\end{proposition}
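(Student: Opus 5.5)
We will prove the bound by double counting the node occupancy of an arbitrary routing, using only that the graph has $2n$ nodes. Fix a routing $\mathcal{P} = \{p_1,\dots,p_n\}$, where $p_i$ is a path in $\mathcal{G}$ from $i$ to $i+n$, and set $C := C(\mathcal{G},\mathcal{P})$. Since $\mathcal{P}$ is arbitrary, it suffices to show $C \ge \frac{d+1}{2}$ and then take the minimum over $\mathcal{P}$.

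\textbf{Step 1 (each path is long).} By hypothesis, the shortest path from $i$ to $i+n$ has length at least $d$ edges. Hence $p_i$ has at least $d$ edges and therefore visits at least $d+1$ distinct nodes. Distinctness follows because $\mathcal{G}$ is a DAG in which every edge goes from a smaller to a larger index, so a path never repeats a vertex.

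\textbf{Step 2 (double counting).} Consider the total number of incidences
\begin{equation*}
S := \sum_{p\in\mathcal{P}} |p| = \sum_{v=1}^{2n} \#\{p \in \mathcal{P} \mid v \in p\}.
\end{equation*}
Evaluating the left-hand side with Step 1 gives $S \ge n(d+1)$. Evaluating the right-hand side, each of the $2n$ summands is at most $C$ by the definition of congestion, so $S \le 2n\,C$.

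\textbf{Step 3 (conclusion).} Combining the two bounds yields $n(d+1) \le 2nC$, that is, $C(\mathcal{G},\mathcal{P}) \ge \frac{d+1}{2}$. Minimizing over all routings $\mathcal{P}$ gives $C(\mathcal{G}) \ge \frac{d+1}{2}$.

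\textbf{Main obstacle.} The only delicate point is the count of vertices in Step 1: we must use $d+1$ vertices per path, not $d$, which requires that paths contain no repeated vertices. This is guaranteed by the strict ordering of positions along every edge of $\mathcal{G}$. Note that nodes outside every path only make the upper bound $S \le 2nC$ weaker, so they cause no difficulty, and no structural property of $f$ is needed beyond the assumed lower bound $d$ on the shortest path.
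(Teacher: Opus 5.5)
Your proof is correct and matches the paper's argument: both double count incidences between nodes and paths to get $n(d+1) \le 2n\,C(\mathcal{G},\mathcal{P})$, then minimize over routings. You also justify explicitly that paths never repeat a vertex, because every edge strictly increases the position index; the paper uses this ($d$ edges give $d+1$ distinct nodes) without stating it.
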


\begin{proposition}[Upper bound on congestion] \label{prop:congestion_upper_bound}
If the pattern is translation-invariant, and we know that the shortest path between token $i$ and $i+n$ is at most $D$ long, for all $1 \leq i \leq n$, then we get:
\begin{equation}
    C(\mathcal{G}) \le D
\end{equation}
\end{proposition}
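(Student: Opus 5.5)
The plan is to exploit translation invariance to build a single "template" path and reuse it, shifted, for every pair $(i, i+n)$. Counting congestion then reduces to counting how many offsets of the template can land on a given node.

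\textbf{Step 1: the template.} In $\mathcal{G}$, an edge from $j$ to $i$ exists iff $i-j = f(k)$ for some $k$, which depends only on the difference $i-j$. By Proposition~\ref{prop:shortest_path}, the shortest path from $i$ to $i+n$ therefore has length
\[
d(i+n,i) = \min\Bigl\{d : \exists\, a\in\NN^d,\ \textstyle\sum_{k=1}^d f(a_k) = n\Bigr\},
\]
which does not depend on $i$. Call this common value $d$; by hypothesis $d \le D$. Fix one minimizing tuple $a = (a_1,\dots,a_d)$. Define the partial sums $s_0 = 0$ and $s_k = \sum_{l\le k} f(a_l)$, so that $0 = s_0 < s_1 < \dots < s_d = n$. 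The sums are strictly increasing because $f$ takes values in $\NN_{>0}$.

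\textbf{Step 2: the routing.} For each $1\le i\le n$, let $p_i$ be the path with nodes $i+s_0, i+s_1, \dots, i+s_d$.
\begin{itemize}
\item Consecutive nodes differ by $f(a_{k+1})$, so every step is an edge of $\mathcal{G}$.
\item All nodes lie in $[i, i+n] \subseteq [1, 2n]$, so $p_i$ is a valid path in the graph on $2n$ positions.
\item $p_i$ starts at $i$ and ends at $i+n$.
\end{itemize}
Hence $\mathcal{P} = \{p_1,\dots,p_n\}$ is a candidate routing, and $C(\mathcal{G}) \le C(\mathcal{G},\mathcal{P})$.

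\textbf{Step 3: counting.} Fix a node $v \in [1,2n]$. The path $p_i$ passes through $v$ iff $v - i \in \{s_0,\dots,s_d\}$. Since the $s_k$ are distinct, each offset $s_k$ determines at most one $i$, namely $i = v - s_k$. This already gives at most $d+1$ paths through $v$.

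The main obstacle is shaving this down to $d$. I do it by observing that the two extreme offsets can never both be realized at the same node:
\begin{itemize}
\item Offset $s_0 = 0$ requires $i = v$, and $i \le n$ forces $v \le n$.
\item Offset $s_d = n$ requires $i = v - n$, and $i \ge 1$ forces $v \ge n+1$.
\end{itemize}
These conditions are incompatible, so at most $d$ of the $d+1$ offsets give an admissible $i \in [1,n]$. Thus at most $d$ paths pass through $v$, and $C(\mathcal{G}) \le C(\mathcal{G},\mathcal{P}) \le d \le D$.
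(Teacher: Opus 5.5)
Your proof is correct and follows the same idea as the paper: translation invariance lets you shift one fixed path from $i$ to $i+n$ to every $i$, and you then count how many shifted copies pass through each node. Your version is more careful than the paper's, which simply asserts that each node lies on at most $D$ paths. Your observation that the offsets $s_0=0$ and $s_d=n$ cannot both be realized at the same node is exactly what excludes the naive bound $D+1$. Working with the common shortest length $d\le D$ also avoids assuming that a path of exactly $D$ edges exists.
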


\begin{corollary} \label{cor:congestion_bound}
    Combining Corollary~\ref{cor:shortest_path} with Proposition~\ref{prop:congestion_upper_bound}, we get that:
    \begin{itemize}
        \item If $f(k) = 2^k$, then $C(\mathcal{G}) \leq \log_2(n)$.
        \item If $f(k) = k^2 + 1$, then $C(\mathcal{G}) \leq 4$.
    \end{itemize}
\end{corollary}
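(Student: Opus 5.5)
The corollary follows by plugging the shortest-path bounds of Corollary~\ref{cor:shortest_path} into Proposition~\ref{prop:congestion_upper_bound}. In the copy setup the graph $\mathcal{G}$ is built on $2n$ positions, and for every $1 \le i \le n$ we must route input $i$ to output $i+n$, so the relevant gap is always exactly $(i+n)-i = n$. Translation invariance makes $d(i+n,i)$ depend only on this difference. Proposition~\ref{prop:shortest_path} then shows that the shortest-path length is the same number $d(n)$ for all $i$: the minimal number of values of $f$ summing to $n$. It therefore suffices to bound $d(n)$ by some $D$ and apply Proposition~\ref{prop:congestion_upper_bound}, whose hypothesis of translation invariance holds by construction.

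\textbf{Case $f(k)=2^k$.} The first item of Corollary~\ref{cor:shortest_path} says the shortest path between $i$ and $i+n$ has length equal to the popcount of $n$, bounded by $\log_2 n$. We take $D=\log_2 n$ and conclude $C(\mathcal{G})\le \log_2 n$.

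Strictly, the popcount of $n$ is at most $\lfloor \log_2 n\rfloor+1$, since $n$ has that many binary digits. The bound $\log_2 n$ is exact when $n$ is a power of two and otherwise matches the paper's stated inequality up to rounding. I would simply cite the corollary as stated, noting that the precise statement carries a possible $+1$.

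\textbf{Case $f(k)=k^2+1$.} The second item of Corollary~\ref{cor:shortest_path} gives $d\le 4$ for every gap, in particular for gap $n$. We take $D=4$ and obtain $C(\mathcal{G})\le 4$.

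We also need every edge of each path to stay inside $\{1,\dots,2n\}$. This holds automatically: a path from $i$ to $i+n$ with positive steps is monotone, so all its nodes lie between $i$ and $i+n$.

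\textbf{Expected difficulty.} This corollary is essentially bookkeeping; the real work sits in Proposition~\ref{prop:congestion_upper_bound}. Presumably that proposition routes every pair along the same sequence of step sizes, ordered identically, so that each node is used at most once per position in the sequence. The only point needing care here is checking that the hypotheses match: the gap is the same for all pairs, which is what makes translation invariance useful, and the shortest-path bound from Corollary~\ref{cor:shortest_path} applies uniformly to that single gap $n$.
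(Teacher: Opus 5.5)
Your argument is the paper's argument. The paper justifies this corollary only by the one-line combination you carry out: the copy task fixes the gap at $n$, Corollary~\ref{cor:shortest_path} bounds the shortest path for that gap, and Proposition~\ref{prop:congestion_upper_bound} turns that into a congestion bound. The second bullet goes through exactly as you wrote it, and your check that monotone paths stay inside $\{1,\dots,2n\}$ is a correct detail the paper leaves implicit.

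Your caveat on the first bullet is correct, but it should be a correction rather than a note to ``cite as stated, up to rounding.'' The inequality $C(\mathcal{G})\le\log_2 n$ is false for some $n$, so no argument can establish it.

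\begin{itemize}
\item For $n=3$: the gap $3$ is not a power of two, so every routing path needs at least $2$ edges. Proposition~\ref{prop:congestion_lower_bound} then gives $C(\mathcal{G})\ge 3/2$. Since $C(\mathcal{G})$ is an integer, $C(\mathcal{G})\ge 2>\log_2 3$.
\item For $n=1$: $C(\mathcal{G})=1>0=\log_2 1$.
\end{itemize}

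The error comes from the bound $d(i,j)\le\log_2(i-j)$ in Corollary~\ref{cor:shortest_path}. The number of ones in the binary expansion of $n$ is at most $\lfloor\log_2 n\rfloor+1$, not $\log_2 n$. What your reasoning actually proves is therefore $C(\mathcal{G})\le\lfloor\log_2 n\rfloor+1$, which is still $\mathcal{O}(\log n)$, and you should state that instead of deferring to the paper's inequality.

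Your remark that the bound is ``exact when $n$ is a power of two'' is also inaccurate. In that case the shortest path has length $1$, so $\log_2 n$ is a valid but loose bound for $n\ge 2$. It fails outright at $n=1$.
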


Together, Propositions \ref{prop:congestion_lower_bound} and \ref{prop:congestion_upper_bound} suggest a direct link between shortest information path and congestion.



\subsection{Cache-efficient patterns}
\label{sec:cache-eff}


Translation-invariant patterns achieve sublinear decoding complexity, but their cache size remains in $\mathcal{O}(n)$.
Indeed, a token at position $j$ may still be attended arbitrarily far in the future, so all past keys and values must remain in memory.

We construct a cache-efficient variant by enforcing the following constraint:
when decoding token $i$, attention is only allowed toward positions that were already used when decoding token $i-1$.
As a result, the active cache evolves recursively and its size remains sublinear.




Let $S_i \subseteq \{1,\dots,i\}$ denote the active cache when decoding token $i$, i.e.\ the set of past positions that may be attended by token $i$.
Starting from the translation-invariant offsets $f(k)$, each target position $i-f(k)$ is rounded upward to the closest position already present in the previous cache.


\begin{definition}[Cache-efficient pattern]
First define the operator:
\begin{equation}
\pi_A(t)=\min([t,\infty)\cap A),
\end{equation}
which rounds index $t$ up to the nearest element of a set $A$.

We define $S_1=\{1\}$ and, for $i>1$:
\begin{equation} \label{eq:cache_efficient_recursive}
S_i=\left\{\pi_{S_{i-1}\cup\{i\}}(i-f(k)) \;\middle|\; k \in \NN , f(k) < i \right\}.
\end{equation}

The attention coefficients then satisfy:
\begin{equation} 
\alpha_{i,j}\neq0
\quad\text{or}\quad
\beta_{i,j}\neq0
\;\Longrightarrow\;
j\in S_i.
\end{equation}

Equivalently, each translation-invariant offset $i-f(k)$ is rounded upward to the nearest position already present in the cache.
\end{definition}

Figure~\ref{fig:attention_matrices} visualizes the resulting attention patterns.
Unlike the original translation-invariant construction, the cache evolves recursively and only contains positions reused by future decoding steps.

\begin{proposition}[Time complexity and cache size]
    For decoding the $n$-th token, the cache-efficient pattern has both decoding complexity and cache size in $\mathcal{O}(f^{-1}(n))$.
\end{proposition}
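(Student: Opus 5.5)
The plan is to bound $|S_n|$ directly from the recursive definition in Equation~\ref{eq:cache_efficient_recursive}, and then obtain the decoding complexity from that bound. By definition, $S_n$ is the image of the set $\{k \in \NN : f(k) < n\}$ under the map $k \mapsto \pi_{S_{n-1}\cup\{n\}}(n-f(k))$. The image of a set has at most as many elements as the set itself, so
\begin{equation*}
|S_n| \le \#\{k \in \NN : f(k) < n\} \le g(n)+1,
\end{equation*}
where $g$ is the quantity from Proposition~\ref{prop:pattern_time_complexity}. Since $f$ is strictly increasing, this count is exactly $g(n)+1$ (counting $k=0$). Under the standing assumption that $f$ extends to an invertible real function, it is $\mathcal{O}(f^{-1}(n))$. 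Note that the rounding can only merge offsets, never create new ones, so we need no lower-level information about which positions survive.

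Next I would argue that the cache needed when decoding token $n$ is exactly $S_n$, together with the new key and value. Every position attended at step $n+1$ lies in $S_n \cup \{n+1\}$, because that is the set onto which $\pi$ rounds. So after step $n$, all positions outside $S_n$ can be evicted permanently. The reason is that later caches are built only from earlier caches, so by induction $S_{m}\subseteq S_{m-1}\cup\{m\}$ for all $m$. It follows that an evicted position never reappears. This inductive invariant is the point I expect to require the most care, since one must verify that $\pi_{S_{m-1}\cup\{m\}}$ is well-defined. It is: $n-f(k)\ge 1$ and the set contains $m\ge n-f(k)$, so the minimum exists. The memory held is therefore $|S_n|$ key/value pairs plus the corresponding past outputs $y_j$ for $j\in S_n$, all of which are $\mathcal{O}(f^{-1}(n))$.

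Finally, for the time complexity, computing $y_n$ via Equation~\ref{eq:framework_as_recurrence} sums over $j\in S_n$ only, which costs $\mathcal{O}(|S_n|)$ operations (treating $d$ as a constant). Maintaining the cache also requires computing the rounded positions $\pi(n-f(k))$ for each admissible $k$. Because the $n-f(k)$ are decreasing in $k$ and $S_{n-1}$ can be kept sorted, all roundings can be done in a single merge-style sweep costing $\mathcal{O}(|S_{n-1}|+g(n))$. Combined with the cache bound, this gives decoding cost $\mathcal{O}(f^{-1}(n))$, matching Proposition~\ref{prop:pattern_time_complexity}.
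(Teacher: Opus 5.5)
Your proposal is correct and is essentially the argument the paper intends. The paper gives no separate proof; it only remarks that each cache is drawn from the previous one plus the new position (so evicted entries never return) and that there is one slot per offset $k$ with $f(k)<n$, which reduces everything to the count $g(n)+1=\mathcal{O}(f^{-1}(n))$ from Proposition~\ref{prop:pattern_time_complexity}. Your image-cardinality bound on $|S_n|$ has one mild advantage: it works directly from the recursion, without the closed form of Proposition~\ref{prop:cache_efficient_eq}. That closed form does not literally agree with the recursion as written (e.g.\ for $f(k)=2^k$ the recursion gives $S_3=\{1,3\}$ while the closed form gives $\{1,2\}$), and your argument holds under either convention.
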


Surprisingly, the recursive construction admits a simple closed-form expression revealing a periodic structure in the cache evolution.

\begin{proposition}[] \label{prop:cache_efficient_eq}
    The recursive definition~\ref{eq:cache_efficient_recursive} admits the following closed-form expression:
    \begin{equation} \label{eq:position_eq}
        S_i = \left\{ a_{k} \left\lceil \frac{i - f(k)}{a_{k}} \right\rceil \;\big|\; k \in \NN , f(k) < i\right\}
    \end{equation}
    where $a_0 = 1$ and $a_{k+1} = a_k \left\lceil \frac{f(k+1) - f(k)}{a_k} \right\rceil$.

    In particular, the cache position associated with scale $f(k)$ evolves periodically with period $a_k$.
\end{proposition}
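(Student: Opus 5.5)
We prove the closed form by strong induction on $i$. Write $p_k(i) := a_k \lceil (i-f(k))/a_k \rceil$ for the smallest multiple of $a_k$ that is at least $i-f(k)$, and $T_i := \{p_k(i) \mid f(k)<i\}$. The claim is $S_i = T_i$ for every $i$. It suffices to show that, for each admissible $k$, the rounding $\pi_{S_{i-1}\cup\{i\}}(i-f(k))$ equals $p_k(i)$.

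\textbf{Preliminary facts.} Two facts about the sequence $(a_k)$ drive everything.
\begin{itemize}
\item Divisibility: $a_{k-1} \mid a_k$, which is immediate from the recursion, so $a_m \mid a_k$ for all $m \le k$.
\item Sandwich: $f(k)-f(k-1) \le a_k < f(k)-f(k-1)+a_{k-1}$, which follows from $x \le a\lceil x/a\rceil < x+a$.
\end{itemize}
We also record the one-step dynamics of each slot. Since $p_k(i-1)$ is the smallest multiple of $a_k$ that is at least $i-1-f(k)$, we get $p_k(i)=p_k(i-1)$ unless $a_k \mid i-1-f(k)$. In that case $p_k(i-1)=i-1-f(k)$ and $p_k(i)=p_k(i-1)+a_k$. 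The periodicity statement with period $a_k$ follows directly from this dichotomy.

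\textbf{Membership step.} We show $p_k(i)\in S_{i-1}\cup\{i\}$, using the induction hypothesis $S_{i-1}=T_{i-1}$. In the ``no jump'' case, $p_k(i)=p_k(i-1)\in T_{i-1}$; if $f(k)=i-1$ the slot is new and $p_k(i)$ is the position $1$ or the corresponding boundary case, checked separately. In the jump case with $k\ge 1$, the sandwich gives
\[
p_k(i)=i-1-f(k)+a_k \in \bigl[\,i-1-f(k-1),\; i-1-f(k-1)+a_{k-1}\bigr).
\]
By divisibility, $p_k(i)$ is a multiple of $a_{k-1}$. The window above has length $a_{k-1}$ and starts at $i-1-f(k-1)$, so it contains exactly one multiple of $a_{k-1}$, namely $p_{k-1}(i-1)$. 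Hence $p_k(i)=p_{k-1}(i-1)\in T_{i-1}$: a jumping slot is handed over to the position previously held by the next finer scale. For $k=0$ we have $a_0=1$ and $p_0(i)=i-f(0)$. This is where the convention matters: the freshly decoded position must count as available in the cache. We read the $\cup\{i\}$ term in~\eqref{eq:cache_efficient_recursive} in this way and check the base case $S_1$ by hand.

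\textbf{Minimality step (main obstacle).} We must show that no element of $S_{i-1}\cup\{i\}$ lies in $[\,i-f(k),\,p_k(i))$. We split the elements $p_m(i-1)$ according to $m$.
\begin{itemize}
\item For $m<k$: we prove by iterating the sandwich that $p_m(i-1)\ge p_{m+1}(i)$, and then chain these inequalities to get $p_m(i-1)\ge p_k(i)$. The intuition is that coarser scales sit further in the past.
\item For $m>k$: we bound $p_m(i-1)<i-1-f(m)+a_m<i-f(m-1)+a_{m-1}-1$ and telescope down to $k$. This should give $p_m(i-1)<i-f(k)$, using $f(k+1)-f(k)\ge a_{k+1}-a_k+1$-type consequences of the sandwich.
\end{itemize}
This ordering, $p_0(i)\ge p_1(i)\ge\cdots$ interleaved with the previous step's values, is the delicate part. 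If the telescoping inequality proves too loose, the fallback is a stronger invariant carried through the induction: the elements of $T_i$ are strictly decreasing in $k$, and the gap between consecutive ones is controlled by $a_k$. We would then use that invariant to exclude cache points from the interval $[\,i-f(k),\,p_k(i))$.
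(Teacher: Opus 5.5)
Your induction skeleton and your membership step are sound. The membership step is in fact a rigorous version of the stay-or-hand-over recursion $p_k(i)\in\{p_k(i-1),p_{k-1}(i-1)\}$, which the paper's proof only asserts before reading off the closed form. The new-slot case $f(k)=i-1$ is not special: $p_k(i)=a_k=p_{k-1}(f(k))$, which is exactly your jump computation with $p_k(i-1)$ read as $0$.

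The gap is in the minimality step for $m>k$, because the inequality you aim for, $p_m(i-1)<i-f(k)$, is false in general. Telescoping the sandwich only yields $p_m(i-1)<i-1-f(k)+a_k$, and the stronger claim fails. For $f(k)=k^2+1$ one has $a_3=6$ and $a_4=12$. At $i=19$, $k=3$, $m=4$ this gives $p_4(18)=12\ge 9=19-f(3)$; in fact $p_4(18)=p_3(19)$. Your fallback invariant fails on the same example: $p_3(18)=p_4(18)=12$, so the elements of $T_i$ need not be strictly decreasing in $k$. For $f(k)=2^k$, where $a_k=2^{k-1}$ for $k\ge 1$, your inequality does hold, which may be why it looked plausible.

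The missing idea is to exclude these points by divisibility rather than by size. For every $m\ge k$, $p_m(i-1)$ is a multiple of $a_m$, hence of $a_k$. By definition of $p_k(i)$, the interval $[\,i-f(k),\,p_k(i))$ contains no multiple of $a_k$. This settles $m>k$, and also $m=k$, which your case split omits.

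For $m<k$ your ordering idea works once stated precisely. The sandwich plus divisibility give $p_{m+1}(j)\le p_m(j)$ at every fixed time $j$, so with the hand-over, $p_k(i)\le p_{k-1}(i-1)\le p_m(i-1)$. The freshly decoded position cannot lie in the interval either, since membership gives $p_k(i)\le i-1$.

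Finally, state the convention explicitly rather than reading it into the $\cup\{i\}$ term. The result needs the available set $S_{i-1}\cup\{i-1\}$ together with $f(0)=1$, so that $p_0(i)=i-1$ is the fresh token. With the literal $\cup\{i\}$ the closed form already fails for $f(k)=2^k$ at $i=3$: the recursion gives $S_3=\{3,1\}$ while the formula gives $\{2,1\}$. Also, $S_1=\{1\}$ differs from the formula's empty set. The induction should therefore start at $i=2$, using $S_1\cup\{1\}=\{1\}$.
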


Equation~\ref{eq:position_eq} shows that the cache positions are quantized onto lattices of step size $a_k$. This has important implications when considering efficient implementations, which could leverage this structure -- see discussions in Appendix~\ref{subsec:efficient_implem}.

\subsection{On the role of dimension}

Our framework -- and the theoretical insights we provide -- focus on the temporal dimension and how information is propagated along the sequence. The actual amount of information that can be stored and propagated by the model (i.e. the actual "memory capacity") depends linearly on the channel dimension, which can thus be leveraged to increase the capacity of the model and compensate for the higher congestion. In practice, many models use much larger channel dimensions (e.g. SSMs using state-expansion) to be competitive with attention-based models while remaining linear in the sequence length.

\section{Experiments}

We validate our claims with two sets of experiments. First, we use synthetic tasks to isolate and probe specific capabilities of token-mixing layers under controlled conditions. These tasks are designed to stress the theoretical knobs introduced in our framework (path length, congestion, cache structure). Second, we assess end-to-end performance on real-world data by training language models on OpenWebText. Together, the results test whether the theoretical predictions survive contact with training dynamics and natural language statistics.

\subsection{Models}

We base our architecture on the standard transformer, and in particular on GPT-2 \citep{radford2019gpt2}, with the exception that we use RoPE for positional embeddings \citep{jianlin2024rope}. The attention layer is swapped for one of the token-mixing layers studied in this paper, all implemented within the same backbone (same depth, dimension, normalization, MLP blocks) so that comparisons isolate the effect of token mixing. For reference, we include full attention and local attention with window size $w{=}8$ as baselines.

Within our framework, we compare several $(A,B)$ structures (shared sparsity for $A$ and $B$ unless otherwise noted): \emph{dense} (lower-triangular) $A$ with strictly lower-triangular $B$ (full resolvent mixer), \emph{banded} with bandwidth $w{=}8$, and two translation-invariant families controlled by stride functions $f$: exponential $f(k){=}2^k$ and quadratic $f(k){=}k^2{+}1$. When relevant, we also evaluate cache-efficient variants (Section~\ref{sec:cache-eff}), which sparsify the working set while preserving the induced access pattern. Practical aspects (parameterization and normalization of $A$ and $B$, conditioning of $(I{-}B)$, and implementation details) are discussed in Appendix~\ref{sec:practical_considerations}.

\subsection{Synthetic tasks} \label{subsec:synthetic_tasks}
\subsubsection{Setup}

We consider three classical sequence problems adapted from prior work, chosen to stress different aspects of path geometry and memory pressure:
\begin{enumerate}[nosep]
    \item \textbf{Copy}: The model must copy an input sequence of size $L$ \citep{arjovsky2016urnn, jelassi2024repeat}. This task measures the ability of the model to memorize the sequence, and directly measures the congestion in the token mixing layers.
    \item \textbf{Associative recall}: A similar yet more challenging task, where the model is given a series of key-value pairs that it must memorize. Then, when queried the keys, it must output the corresponding values. This measures whether the mixer can maintain a structured, addressable memory and is often used to benchmark SSM-like models \citep{arora2024zoology, dao2024mamba2}.
    \item \textbf{Multi-hop recall}: Inspired from \citet{fagnou2024chacal}, this task requires the model to solve a chain of associative recalls, which is particularly difficult for non-recurrent models. We modify the associative recall task by replacing some values by past keys, that the model must then recursively lookup. This measures the state-tracking ability of the models, which is required in various downstream tasks \citep{mavi2024multihop}.
\end{enumerate}

To avoid overfitting to a single length scale, we randomize sequence lengths per batch. All models consist of two Transformer blocks; the first serves to preprocess the token stream into a representation amenable to the mixer, and the second performs the task-specific transport. Training, optimization, and sampling protocols are kept identical across models. Full hyperparameters and setup details are given in Appendix~\ref{sec:exp_details}.


\subsubsection{Results}

We report the results for the synthetic tasks in Table~\ref{tab:synthetic_tasks_results}.

Only the most general formulation with $A$ and $B$ dense is able to perfectly solve all tasks. While standard attention works as well for the copy and associative recall tasks, it struggles on multi-hop recall, which is expected \citep{fagnou2024chacal}. Local attention performs poorly across all settings.

The banded structure performs relatively well but falls behind patterns that involve more long-distance connections. The pattern induced by $f(k) = k^2 + 1$ seems better than $f(k) = 2^k$, although not by a big margin. $f(k) = k^2 + 1$ is the only one able to reach a near perfect score on the copy task, which is expected since we showed its congestion is at most 4. Its performance is however not as great for the recall tasks, which can be due to the difficulty of learning the optimal paths \citep{wen2025fromsparse, kim2025transformersprovably}.

The results appear especially encouraging for the cache-efficient variants. They even sometimes outperform their original counterparts, which is surprising. Still, this suggests that sparsifying the cache does not necessarily reduce the expressivity of the layer.

\subsection{Language modeling}
In this section we evaluate the models on a language modeling task using the OpenWebText dataset \citep{gokaslan2019openwebtext}. The goal is to confirm that the theoretical insights, and the results on synthetic tasks, can transfer to real natural language.
We train three models with respectively 125M, 355M and 775M parameters, while replacing attention with various token mixing layers. Context size is 1024 for the smaller model, to 2048 for the larger ones. The full experimental setup is detailed in Appendix~\ref{sec:exp_details}.

\begin{figure*}[ht]
    \centering
    \includegraphics[width=1.0\linewidth]{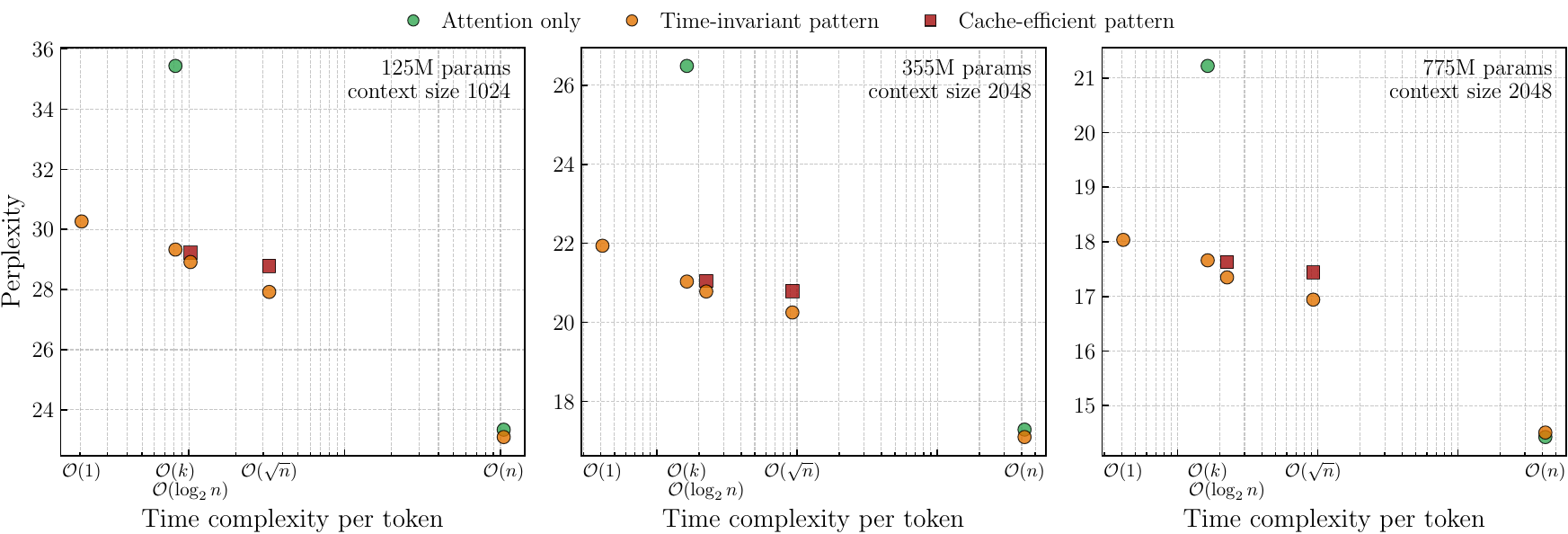}
    \caption{OpenWebText perplexity (lower is better) versus per-token time complexity for respectively 125M, 355M and 775M parameters models. Points correspond to attention-based, recurrent, and cache-efficient variants; the $x$-axis annotates canonical regimes $O(1)$, $O(k)$, $O(\log_2 n)$, $O(\sqrt{n})$, and $O(n)$.}
    \label{fig:ppl_vs_complexity} 
\end{figure*}

\paragraph{Results Analysis}
Here, we analyze the results presented in Figure~\ref{fig:ppl_vs_complexity}, first by comparing results inside each class and then comparing every trained models' performance.

\textbf{Comparison \emph{inside} each class.}
\begin{enumerate}[nosep, topsep=0pt]
    \item \textit{$O(n)$ time.} Within the full-complexity regime, the layer with dense lower-triangular $A$ and strictly lower-triangular $B$ consistently sits slightly below or matches standard full-attention. Intuitively, $B$ accumulates causal summaries; the resolvent $(I\!-\!B)^{-1}$ expands them into a dense, geometry-aware receptive field; $A$ then reprojects, yielding more effective long-range mixing than dot-product attention for the same complexity.
    
    \item \textit{Sublinear variants ($O(\sqrt{n})$ -- $O(\log n)$).} 
    First, we observe that the $O(\sqrt n)$ model gives the better results in this bracket, corroborating with its denser $B$ matrix. 
    Among structured sublinear designs, we observe a tight low-perplexity frontier, with two distinct behaviors. Cache-efficient variants always perform slightly worse than their time-invariant counterparts. While it is expected to have a gap, it is still encouraging to see that this gap remains small, suggesting that the cache-efficient transformation does not worsen the expressivity significantly, while greatly improving memory-efficiency.
    Lastly, in this setting, it is interesting to note that the standard attention-based model (local attention) is largely beaten by its recurrent counterpart in $\mathcal{O}(k)$.

    \item \textit{Recurrent $O(1)$.} Constant-time models cluster at higher perplexities with comparatively small spread. This aligns with the congestion view: compressing the entire past into a single state under-exploits long-range dependencies under the given budget. This proposition still beats the local attention method.
\end{enumerate}

\textbf{Comparison \emph{across} classes: the Pareto frontier.}
Taken together, the points trace a clear speed-accuracy Pareto curve. Large gains occur when moving off $O(1)$ to sublinear access; by $O(\log n)$, diminishing returns appear, with several cache-efficient models matching the attention band -- thus recovering most of attention’s accuracy at substantially lower theoretical cost. If $O(n)$ is affordable, the resolvent mixer “wins the bracket,” surpassing standard attention without a significant parameter increase; if not, well-designed $O(\sqrt n)$ or $O(\log n)$ caches provide competitive perplexity at a fraction of per-token complexity.

\section{Discussion}

The factorization $y=(I-B)^{-1}Ax$ provides a framework-level view of causal token mixing, where $A$ captures the direct input–output interactions within a single step and $B$ governs the recurrent propagation of information across steps. Varying their sparsity patterns and structure spans a range of architectures often treated as distinct, while keeping strict causality and triangular solves explicit. Translation-invariant patterns parameterized by a strictly increasing function $f$ make complexity and geometry analyzable: the decoding cost scales as $O(f^{-1}(n))$, shortest-path lengths are tied to integer decompositions of $i{-}j$, and congestion bounds characterize expressivity bottlenecks. Choices such as $f(k)=2^k$ (logarithmic time, logarithmic depth) or $f(k)=k^2+1$ (square-root time, constant depth) illustrate that “global vs. local” is not a binary; it is a tunable Pareto surface, with hop depth, connectivity, and cache size jointly controlled by $(A,B)$ and the pattern $f$.

Empirically, this perspective helps organize results across complexity regimes. Cache-efficient variants -- restricting attention to indices already present in the previous step’s cache -- reach $O(f^{-1}(n))$ complexity for both time and memory, and often match, or only slightly underperform, their non-cache counterparts. This suggests that structured choices for $B$ act as a useful inductive bias rather than a limitation. Aggregating models reveals a clear speed–accuracy frontier: moving from $O(1)$ to sublinear access yields substantial gains, and by $O(\sqrt{n})$ the proposed designs recover a significant fraction of the performance gap to full attention. When $O(n)$ complexity is acceptable, the fully general resolvent mixer $(I-B)^{-1}A$, with dense lower-triangular $A$ and strictly lower-triangular $B$, outperforms standard full attention at comparable per-token cost with only a marginal increase in parameters. In practice, this leads to a simple design procedure: (i) choose a target complexity class based on system constraints, (ii) select $f$ to control hop geometry and congestion, (iii) enforce cache efficiency to improve memory locality, and (iv) adjust the parameterization of $A$ to balance expressivity and stability.

Limitations and system implications follow directly. Our experiments involve relatively small models and unoptimized kernels; realizing end-to-end speedups requires specialized block-triangular forward-substitution kernels with regular memory access that exploit the periodic structure of cache-efficient patterns. While we discuss several practical considerations in Appendix~\ref{sec:practical_considerations}, there remain challenges to solve in order to make such token mixing layers scale efficiently and compete against other models. Stability and optimization might depend on parameterizations that preserve simple invariants; alternative parameterizations could change training dynamics. Scaling studies at longer contexts and larger model sizes, heterogeneous layers mixing different $(A,B)$ structures, principled initializations for $B$ (for example inspired by the works on structured initializations for SSM~\citep{gu2021efficiently}), and parameter-efficient factorizations for $A$ and $B$ (low-rank, semiseparable, or Toeplitz) are natural next steps to test whether the observed Pareto frontier persists at LLM scale.

\section{Conclusion}

This work reframes causal token mixing as a small set of explicit, controllable design choices, turning “which architecture?” into “which geometry and budget?”. Rather than reiterating the factorization or translation-invariant analysis, we emphasize the shift in practice: pick a target complexity class, shape hop geometry to manage path lengths and congestion, and deploy cache-aware implementations that honor those choices. Our experiments  anchor the theory by showing that sublinear access captures most of the accuracy of dense mixing at far lower budget, and that within the $\mathcal{O}(n)$ bracket a resolvent-style mixer can outperform standard attention with only marginal parameter overhead.

Looking forward, two tracks are most promising. On the \emph{systems} side, specialized triangular-solve kernels and cache layouts are the lever to translate asymptotic gains into latency/throughput improvements at long context. On the \emph{modeling} side, relaxing strict translation invariance toward learned, data-dependent patterns while preserving analyzable path and cost guarantees could broaden the design space without losing clarity. Our aim is not to crown a single mixer, but to provide a principled toolkit through which future architectures can be designed, analyzed, and engineered coherently.

\section*{Acknowledgements}

This work was granted access to the HPC resources of IDRIS under the allocations AD011015154R2 and A0191016927 made by GENCI. It has received support from the French government, managed by the National Research Agency, under the France 2030 program with the reference “PR[AI]RIE-PSAI” (ANR-23-IACL-0008) and "PEPR-SHARP" (ANR-23-PEIA-0008). This work was also partially supported by JST PRESTO JP-MJPR23P5, JST CREST JPMJCR21M2, JST NEXUS JP-MJNX25C4.

\section*{Impact Statement}


This paper presents work whose goal is to advance the field of Machine
Learning. There are many potential societal consequences of our work, none of
which we feel must be specifically highlighted here.



\bibliography{biblio}
\bibliographystyle{src/icml2026}

\newpage
\appendix
\onecolumn

\section{Examples encompassed by the framework}
\label{subsec:examples}

The framework aims at generalizing standard token mixing layers into a unified view, while opening the way to a new class of layers. The following section describes in more details how standard layers fit in the framework as special cases of the structure of $A$ and $B$ from Equation~\ref{eq:framework_as_matrix}, which control the token interactions.

\subsection{Attention}


Given input vectors $U = (u_1 \dots u_n)^\top \in \R^{n\times d}$, the (self) attention layer \citep{vaswani2017attention} can be obtained using our framework by picking:
\begin{equation} \label{eq:attention_in_framework}
    \begin{cases}
        A = \text{softmax} \left( \frac{1}{\sqrt{d_{\text{head}}}}UW^q {W^k}^\top U^\top + M \right) \\
        B = 0 \\
        X = U W^v
    \end{cases}
\end{equation}
where $W^q$, $W^k$, $W^v$ are respectively the query, key and value weight matrices, and $M$ is a mask matrix that enforces causality. Note that the input of the attention $U$ is linearly transformed using $W^v$ to obtain the values, which correspond the input $X$ in our framework.

\subsection{Sparse / local attention}

Attention variants that utilize special attention patterns also fit in the framework. In Equation~\ref{eq:attention_in_framework}, the mask matrix $M$ may be used to mask out arbitrary entries of the attention matrix, thus enforcing sparse structure. This is the case in local attention, dilated local attention \citep{beltagy2020longformer}, and other sparse attention mechanisms \citep{zaheer2020big, roy2021efficient}.

\subsection{Linear recurrence}

Despite the success of RNNs, and in particular gated RNNs such as GRUs and LSTMs, their sequentiallity limits their scale. Several works \citep{bradbury2016qrnn, stanic2023qlstm, feng2024minrnns, de2024griffin} have proposed simplifications to the gating mechanism, removing the dependency of the gating coefficients to the previous hidden state. This modification allows the use of efficient parallel scan algorithms, unlocking larger model scales. Formally, these models all follow a similar equation for computing the hidden states $h_t \in \R^d$:
\begin{equation} \label{eq:linear_rnn}
    h_t = r_t \odot h_{t-1} + i_t \odot x_t
\end{equation}
where $r_t \in \R^d$ is the \textit{reset} gate, $i_t \in \R^d$ is the \textit{input} gate, and both are functions of the current input $x_t \in \R^d$.

To fit these in our framework, we need to consider this equation elementwise (or equivalently setting $d=1$ and then stacking $d$ independent heads). Equation~\ref{eq:linear_rnn} becomes, in our framework's notations:
\begin{align}
    y_t &= \beta_{t,t-1} y_{t-1} + \alpha_{t,t} x_t
\end{align}
which means, in terms of matrices $A$ and $B$:
\begin{equation} \label{eq:linear_rnn_in_framework}
    \begin{cases}
        A = \begin{pmatrix}
            i_1 & & 0\\
            & \ddots & \\
            0 & & i_n
        \end{pmatrix} \\
        B = \begin{pmatrix}
            0 & & & 0 \\
            r_2 & & & \\
            & \ddots & & \\
            0 & & r_n & 0
        \end{pmatrix}
    \end{cases}
\end{equation}

\subsection{State-space models}
State-space models are ruled by the following equations, which represent the discretization of a particular 1-dimensional continuous system:
\begin{equation}\label{eq:ssm}
    \begin{cases}
        h_t = \mathbf{A}_t h_{t-1} + \mathbf{B}_t u_t \\
        y_t = \mathbf{C}_t h_t
    \end{cases}
\end{equation}
where the matrices $\mathbf{A}_t \in \R^{N\times N}$, $\mathbf{B}_t \in \R^{N\times 1}$ and $\mathbf{C}_t \in \R^{1\times N}$ can be freely parameterized \citep{gu2021efficiently, fu2022hungry}, and possibly be time-dependent \citep{gu2024mamba, dao2024mamba2}. $N$ is the state expansion factor.

In practice, the most recent models all use a diagonal transition matrix $\mathbf{A} = \text{diag}(a_1, \dots, a_N)$ \citep{gu2022parameterization}, such that we can consider Equation~\ref{eq:ssm} elementwise in our framework (or equivalently use $N=1$):
\begin{align}
    y_t &= \beta_{t,t-1} y_{t-1} + \alpha_{t,t} x_t \\
    z_t &= \mathbf{C}_t y_t
\end{align}
where the SSM output $z_t$ is a linear projection of the recurrence output $y_t$.
In terms of matrices $A$ and $B$, we have:
\begin{equation} \label{eq:ssm_in_framework}
    \begin{cases}
        A = \begin{pmatrix}
            \mathbf{B}_1 & & 0\\
            & \ddots & \\
            0 & & \mathbf{B}_n
        \end{pmatrix} \\
        B = \begin{pmatrix}
            0 & & & 0 \\
            \mathbf{A}_2 & & & \\
            & \ddots & & \\
            0 & & \mathbf{A}_n & 0
        \end{pmatrix}
    \end{cases}
\end{equation}

Important note: our notations differ from SSMs, and in particular here the attention matrix $A$ is akin to the input projection matrix $\mathbf{B}$, while the recurrence matrix $B$ replaces the transition matrix $\mathbf{A}$.

\paragraph{Mamba.} In the specific case of Mamba \citep{gu2024mamba}, the element-wise equation for the scalar $h_i$ is:
\begin{equation}
    h_i = \exp(\Delta_i a) h_{i-1} + \Delta_i b_i x_i
\end{equation}
where $\Delta_i$ is a small (input-dependent) step size, $a$ is a learnable parameter, and $b_i$ is a function of the input tokens. This can be expressed in our framework by defining $A$ and $B$ as follows:
\begin{equation} \label{eq:mamba_in_framework}
    \begin{cases}
        A = \begin{pmatrix}
            \Delta_1 b_1 &&& 0 \\
             & \Delta_2 b_2 && \\
             &  & \ddots & \\
            0 & & & \Delta_n b_n
        \end{pmatrix} \\
        B = \begin{pmatrix}
            0 &&& 0 \\
            \exp(\Delta_2 a) &&&\\
            & \ddots &&\\
            0 && \exp(\Delta_n a)&0
        \end{pmatrix}
    \end{cases}
\end{equation}

\section{Practical considerations} \label{sec:practical_considerations}


\subsection{Parameterization of $A$ and $B$} \label{sec:parametrization}
Our framework does not make any assumption on how the coefficients $\alpha_{ij}$ and $\beta_{ij}$ are computed. That is, they could be constant \citep{katharopoulos2020linear}, depend on the input \citep{dao2024mamba2, yang2023gated} or not \citep{gu2022s4}, the distance \citep{sun2023retentive}, etc.

In an effort to bridge the gap between attention and SSMs, in all our experiments we choose attention-like coefficients for both $\alpha_{ij}$ and $\beta_{ij}$. That is, $A$ and $B$ are computed like two independent attention matrices, with the addition of a gating system to ensure normalization (see Section~\ref{sec:normalization}).
The gating and its initialization are adapted from Mamba-2 \citep{dao2024mamba2}, and was observed to help in all pattern choices for $B$.
However more parameter-efficient choices may be considered in future work and may improve the overall efficiency.


In Table~\ref{tab:mamba_ablation} we show that in the specific case of first-order recurrence (e.g. SSMs), using the Mamba parameterization instead of ours provides a small yet noticeable gain:

\begin{table}[ht]
\caption{Comparison of first-order recurrence models using either our parameterization of $A$ and $B$, or the Mamba rule \citep{gu2024mamba}. We report the perplexity for the 125M parameters model, using the same experimental setup as the other experiments.}
\label{tab:mamba_ablation}
\centering
\begin{tabular}{cc}
    \toprule
    \textbf{Parameterization} & \textbf{Perplexity} \\
    \midrule
    Ours & 31.28 \\
    Mamba & 30.64 \\
    \bottomrule
\end{tabular}
\end{table}

\subsection{Normalization} \label{sec:normalization}
A recurring problem in recurrent models is vanishing and exploding gradients. To prevent such phenomenon, one ought to carefully normalize the weights in $A$ and $B$. This is key for allowing the model to learn meaningful representations.

From Equation~\ref{eq:framework_as_recurrence}, we can see that if we assume that all $||x_j|| \leq C$, and $||y_j|| \leq C$ for $j<i$, we get:
\begin{align}
    || y_i || \leq C \left[ \sum_{j=1}^i\alpha_{ij} + \sum_{j=1}^{i-1}\beta_{ij} \right]
\end{align}
We then only need to ensure that $\sum_{j=1}^i\alpha_{ij} + \sum_{j=1}^{i-1}\beta_{ij} = 1$, or in matrix notations: $(A+B) \: \mathbf{1} = \mathbf{1}$. In practice this can be done by weighting $A$ and $B$. \citet{fagnou2024chacal} do this using a fixed hyperparameter $\gamma$ and use $A=(1-\gamma)\widetilde{A}$ and $B=\gamma\widetilde{B}$. We generalize this with an input-dependent gating vector $g \in [0;1]^d$ and using $A=(1-\text{diag}(g))\widetilde{A}$ and $B=\text{diag}(g)\widetilde{B}$.

However, one could also apply a softmax normalization to the full transformation matrix $T=(I-B)^{-1}A$. As discussed in \citet{dao2024mamba2}, this can be achieved by computing the forward pass $Tx$ without normalizing, while computing $T \:\mathbf{1}$ at the same time and using it to normalize the output. Unfortunately, while this is mathematically sound, in the general case values skyrocket and overflow, causing numerical errors. We leave solving this numerical stability problem for future work.

\subsection{Weight sharing}
While $A$ and $B$ play different roles, we could still expect them to be correlated. In this section we investigate the effect of a weight sharing of the form:
\begin{align}
    A = B D + D'
\end{align}
where $D$ and $D'$ are diagonal matrices. This is similar yet more general than \citet{fagnou2024chacal}. Note that all the previous theoretical results still stand, since there was no assumption on $A$ and $B$ other than their sparsity pattern.

It turns out that the forward equation simplifies nicely:
\begin{align}
    y &:= (I-B)^{-1} (BD + D') x \nonumber \\
    &= (I-B)^{-1} (D + D')x - D x
\end{align}
Since addition and multiplication of diagonal matrices is linear and negligible, the only costly operation that remains is the multiplication by $(I-B)^{-1}$, and \textbf{the multiplication by $A$ disappeared}.

Looking now at the recursive form, by introducing the variable $z_i := y_i + d_i x_i$, we obtain:
\begin{align}
    z_i &= \sum_{j=1}^{i-1} \beta_{ij} z_i + (d_i + d'_i)x_i
\end{align}
The recurrence got much simpler, and in particular the \textbf{cache size is reduced} since we only need to store the useful past $z_j$, instead of both the $x_j$ and $y_j$ in the general case.

\subsection{Efficient implementations} \label{subsec:efficient_implem}

While implementing a custom CUDA kernel to perform the sparse triangular solves efficiently is out of the scope of this paper (we used the native torch function which is always quadratic) we discuss practical implementations in this section. 

The causal token-mixing operators we study induce lower-triangular matrices $M \in \mathbb{R}^{n \times n}$ with sparse, repetitive structure. 
Forward substitution on such matrices has complexity proportional to the number of nonzeros. 
If each row has at most $w$ nonzeros, then solving $Mx = b$ requires $O(nw)$ operations, compared to $O(n^2)$ for dense triangular solves. 

\paragraph{Block forward substitution.} 
Because the sparsity patterns we consider are \emph{periodic} along the diagonal (see Proposition~\ref{prop:cache_efficient_eq}), the system can be naturally partitioned into blocks, where each block shares the same nonzero structure. 
This allows the solve to be reorganized into a sequence of block updates:
\[
x^{(k)} = M_{kk}^{-1} \Big( b^{(k)} - \sum_{\ell < k} M_{k\ell} x^{(\ell)} \Big),
\]
where $M_{kk}$ denotes the $k$-th diagonal block. 
Each block update involves small dense solves (with identical shape across blocks), which can be vectorized and batched efficiently on GPUs. 

\paragraph{Parallelism.}
While the numerical entries of $M$ vary, the periodicity ensures that the memory access pattern and dependency graph repeat exactly. 
This enables highly regular parallel implementations: a single kernel can encode the substitution pattern once, and apply it across all blocks with different coefficients. 
Such regularity improves cache efficiency and load balancing compared to generic sparse triangular solvers. 

\paragraph{Hierarchical structure.}
If the sparsity pattern itself is recursive (e.g., defined hierarchically or via dilation), then one could further apply divide-and-conquer strategies, solving the system by recursively eliminating larger and larger blocks. 
This approach can reduce synchronization costs and naturally exposes parallelism across scales, complementing the block forward substitution scheme.

\section{Experimental details} \label{sec:exp_details}


\subsection{Datasets}

\paragraph{Synthetic tasks.}
All three synthetic datasets are generated on the fly during training, such that there is no overfitting problem. We employ some form of curriculum training as in \citep{dao2024mamba2}, with the training being split into 4 phases which divide the sequence length (and other task-specific parameters if suited) by respectively 8, 4, 2 and 1.

\paragraph{Copy.}
The copy task is adapted from \citet{arjovsky2016urnn} and \citet{jelassi2024repeat}. The model must copy sequences with length up to $L=128$. The beginning and end of the input sequence are marked by special tokens.

\paragraph{Associative recall.} We adapt this task from \citet{arora2024zoology}. We use up to 64 key-value pairs, and sequences up to 256 long. We additionally randomize more the position of the keys and values to prevent any bias favoring a specific attention pattern. 

\paragraph{Multi-hop.} We use the same setup as the associative recall task, but each value has a probability $p=0.5$ to be replaced by a preceding key. Since the task is harder to learn, we also add labels to the intermediary keys to help the model learn.

\paragraph{OpenWebText.} This dataset was built to replicate the (undisclosed) training dataset of GPT-2 \citep{radford2019gpt2}. It contains 38GB of text data from 8,013,769 documents. We use the same tokenizer as GPT-2.

\subsection{Training setup}
Training is performed on single NVIDIA V100 GPUs for the synthetic tasks, and pairs of NVIDIA A100 GPUs for language modeling. We use mixed precision with FP16.

All runs use a linear warmup for the learning rate, followed by a cosine scheduler.

\subsection{Hyperparameters}
We report all hyperparameters in Table \ref{tab:hyperparams}

\begin{table}[ht]
\caption{Hyperparameters used in the different experiments.}
\label{tab:hyperparams}
\centering
\begin{tabular}{c|cccc}
    \toprule
    \textbf{Name} & \textbf{Synthetic tasks} & \multicolumn{3}{c}{\textbf{Language modeling}} \\ \cline{3-5} 
    base model params & 4M & 125M & 355M & 775M \\
    \midrule
    train steps & 20k & 20k & 25k & 25k \\
    warmup steps & 2k & 1k & 2500 & 2500\\
    lr & 3e-3 & 5e-4 & 2.5e-4 & 2e-4\\
    batch size & $\geq 1024$ & 128 & 128 & 256 \\
    weight decay & 0.1 & 0.1 & 0.1 & 0.1 \\
    $\beta_1$ & 0.9 & 0.9 & 0.9 & 0.9 \\
    $\beta_2$ & 0.98 & 0.95 & 0.95 & 0.95 \\
    grad max norm & 1.0 & 1.0 & 1.0 & 1.0 \\
    \midrule
    vocab size & 8,192 & 50,257 & 50,257 & 50,257 \\
    context length & $\leq 256$ & 1024 & 2048 & 2048 \\
    num layers & 2 & 12 & 24 & 36 \\
    dim & 256 & 768 & 1024 & 1280 \\
    ff dim & 1024 & 3072 & 4096 & 5120 \\
    head dim & 64 & 64 & 64 & 64 \\
    \bottomrule
\end{tabular}
\end{table}

\section{Proofs} \label{sec:proofs}


\subsection{Proof of Proposition \ref{prop:pattern_time_complexity}}
The result is relatively straightforward. At time $n$, we attend the past indices $n-f(0)$, $n-f(1)$, \dots, $n-f(i)$, as long as $n-f(i) > 0$.

The number $k_n$ of past tokens attended is:
\begin{align}
    k_n &:= \#\{i \in \NN \;|\; 0 < n-f(i) \} \\
    &= 1 + \max\{i \in \NN \;|\; 0 < n-f(i) \} \\
    &= 1 + \max\{i \in \NN \;|\; f(i) < n \} \\
    &= 1 + g(n) \\
    &= \mathcal{O}(g(n))
\end{align}

If $f$ can be extended to an invertible real function $\RR \rightarrow [1, \infty)$, then we have by definition of $g$ that:
\begin{align}
    f(g(n)) &\leq n \\
    \iff g(n) &\leq f^{-1}(n) 
\end{align}
and hence $k_n = \mathcal{O}(f^{-1}(n))$.

\subsection{Proof of Proposition \ref{prop:shortest_path}}
Given two positions $i<j$, we denote $d(i, j)$ the length of the shortest path from token $i$ to token $j$.

We can consider the underlying directed acyclic graph of the pattern: each token is a node, and there is an edge $i \rightarrow j$ iff $\exists \: k \in \NN, f(k) = i - j$.
\begin{align}
    d(i, j) &= \min \left\{ k \;|\; \exists \: v \in [1,i]^{k-1}, i \rightarrow v_1 \rightarrow \dots \rightarrow v_{k-1} \rightarrow j \right\} \\
    &= \min \left\{ k \;|\; \exists \: v \in [1,i]^{k-1}, u \in \NN^k, f(u_1)=v_1-i, \dots, f(u_k)=j-v_{k-1} \right\} \\
    &= \min \left\{ k \;|\; \exists \: u \in \NN^k, \sum_{p=1}^k f(u_p) = j-i \right\}
\end{align}

\subsection{Proof of Corollary \ref{cor:shortest_path}}

While the shortest path is a nontrivial quantity in the general case, we can find exact values for simple choices for $f$:

\paragraph{Exponential} $f(i)=2^i$: A key observation is that the shortest path does not involve two edges that share the same power of 2 -- otherwise they could have been replaced by a single edge uses the next power of two. Hence we are looking for a way to decompose $j-i$ into a sum of \textit{unique} powers of two. The only solution is given by its binary representation.

\paragraph{Quadratic} $f(i)=(i+1)^2$: Lagrange's four-square theorem tells us that every natural number can be written as the sum of at most 4 squares. First, we can see that when $j-i \leq 3$ this is trivially true. Consider the number $m=j-i-4$. By Lagrange's four-square theorem it can be written as $m=a^2+b^2+c^2+d^2$. And hence $j-i = (a^2+1) + (b^2+1) + (c^2+1) + (d^2+1)$.

Note: while it is surprising to get a constant value, remind that exponential $f$ gives a logarithmic bound. Having a denser attention pattern should get a much better bound than logarithmic, which at our scale would appear constant.

\subsection{Proof of Proposition \ref{prop:congestion_lower_bound}}

Suppose we have a rounting $\mathcal{P}$ containing $n$ directed paths in a graph $\mathcal{G}$, each of length at least $d$ edges. 
Let the graph have $2n$ nodes (for the copy task setup). 
By definition, a path of length $d$ edges visits $d+1$ nodes, so the total number of node visits across all $n$ paths is at least:
\begin{align}
    \text{total visits} \ge n \cdot (d+1)
\end{align}

Let $C(\mathcal{G}, \mathcal{P})$ denote the maximum number of paths passing through any single node. 
Since each of the $2n$ nodes can be traversed by at most $C(\mathcal{G}, \mathcal{P})$ paths, the total number of visits is also upper bounded by:
\begin{align}
    \text{total visits} \le 2n \cdot C(\mathcal{G}, \mathcal{P})
\end{align}

Combining these inequalities, we obtain:
\begin{align}
    n \cdot (d+1) &\le 2n \cdot C(\mathcal{G}, \mathcal{P}) \\
    \implies C(\mathcal{G}, \mathcal{P}) &\ge \frac{d+1}{2}
\end{align}
Since this inequality is true for any rounting $\mathcal{P}$, it is also true for their minimum $C(\mathcal{G})$.

\subsection{Proof of Proposition \ref{prop:congestion_upper_bound}}

Consider the routing $\mathcal{P}$ where the $n$ paths have a length of $D$ edges, and where the path for input $i+1$ is a shift of the path for input $i$ (possible since we assume the token mixing pattern is \emph{translation-invariant}).

In this case, each node is visited by at $D$ paths simultaneously, which occurs in the overlapping region of consecutive paths. 
Hence, the congestion if this routing is:
\begin{align}
    C(\mathcal{G}, \mathcal{P}) = D
\end{align}
Which means the minimum routing is at most $D$.

\subsection{Proof of Proposition \ref{prop:cache_efficient_eq}}

If we note $p_k(i)$ the index we obtain by increasing $i-f(k)$ until reaching a cached token (with $i > f(k)$), we can write:
\begin{align}
    p_k(i) &= \begin{cases}
        p_k(i-1)     &\text{if } 0 < i-f(k) \leq p_k(i-1) \\
        p_{k-1}(i-1) &\text{else.}
    \end{cases} \\
    &= p_{k-1}(f(k)) + p_{k-1}(f(k)) \left\lfloor \frac{i - f(k)-1}{\underbrace{p_{k-1}(f(k))}_{a_k}} \right\rfloor \\
    &= a_k \left( 1 + \left\lfloor \frac{i - f(k)-1}{a_k} \right\rfloor \right) \\
    &= \text{"the smallest multiple of $a_{k}$ that is strictly greater than $i - f(k) - 1$"}\\
    &= \text{"the smallest multiple of $a_{k}$ that is greater or equal to $i - f(k)$"}\\
    &= a_{k} \left\lceil \frac{i - f(k)}{a_{k}} \right\rceil
    \label{eq:position_eq_proof}
\end{align}

We can use this equation to find a recursive relation for $p_{k-1}(f(k)-1)$:
\begin{align}
    a_k &:= p_{k-1}(f(k)) \\
    &= a_{k-1} \left\lceil \frac{f(k) - f(k-1)}{a_{k-1}} \right\rceil
\end{align}




            

\end{document}